\documentclass[11pt]{article}
\usepackage{amsmath,amsthm}
\newtheorem{theorem}{Theorem}

\usepackage[]{acl}

\usepackage{times}
\usepackage{latexsym}
\usepackage[most]{tcolorbox}

\usepackage{listings}

\usepackage[T1]{fontenc}

\usepackage[utf8]{inputenc}

\usepackage{microtype}

\usepackage{inconsolata}

\usepackage{graphicx}

\usepackage{amsmath}
\usepackage{amsthm}
\usepackage{amsfonts}
\usepackage{amssymb}
\usepackage{mathtools}
\usepackage{nicefrac}
\usepackage{algorithm}
\usepackage{multicol}
\usepackage{algorithmicx}
\usepackage{algpseudocode}

\usepackage{graphicx}
\usepackage{subcaption}  
\usepackage{wrapfig}
\usepackage{booktabs}
\usepackage{array}
\usepackage{multirow}
\usepackage{adjustbox}
\usepackage{lscape}
\usepackage{caption}
\usepackage{wrapfig}
\usepackage{algpseudocode}
\usepackage{cleveref}

\definecolor{darkgreen}{rgb}{0.0, 0.4, 0.0}
\usepackage{threeparttable}

\usepackage{xcolor}
\usepackage{colortbl}
\usepackage{enumitem}
\usepackage{makecell}
\definecolor{colorTab}{rgb}{0.9,0.9,0.98}
\definecolor{color3}{gray}{0.95}
\usepackage[table]{xcolor}
\definecolor{codeblue}{rgb}{0.25, 0.5, 0.5}
\definecolor{codekw}{rgb}{0.35, 0.35, 0.75}
\definecolor{Gray}{gray}{0.95}
\definecolor{TC}{rgb}{0.75, 0.30, 0.25}  
\definecolor{SC}{rgb}{0.30, 0.50, 0.80}  

\usepackage[most]{tcolorbox}
\newtcolorbox{paperbox}[2][]{%
  enhanced,
  colback=white,
  colframe=black,
  fonttitle=\bfseries,
  before=\refstepcounter{table}, 
  title={#2},
  #1
}
\newtcolorbox{paperboxdouble}[2][]{%
  float*=t,              
  width=\textwidth,    
  enhanced,
  colback=white,
  colframe=black,
  fonttitle=\bfseries,
  code={\refstepcounter{table}}, 
  title={Table~\thetable: #2},
  #1
}

\title{Learning While Staying Curious: Entropy-Preserving Supervised Fine-Tuning via Adaptive Self-Distillation for Large Reasoning Models}

\author{
\textbf{Hao Wang}$^{1,*}$ \quad
\textbf{Hao Gu}$^{2,*}$ \quad
\textbf{Hongming Piao}$^{1}$ \quad
\textbf{Kaixiong Gong}$^{3}$\\
\textbf{Yuxiao Ye}$^{2}$ \quad
\textbf{Xiangyu Yue}$^{3}$ \quad
\textbf{Sirui Han}$^{2,\dagger}$ \quad
\textbf{Yike Guo}$^{2,\dagger}$ \quad
\textbf{Dapeng Wu}$^{1,\dagger}$\\
$^{1}$City University of Hong Kong \quad
$^{2}$The Hong Kong University of Science and Technology \quad \\
$^{3}$The Chinese University of Hong Kong
\\
hao.wang@my.cityu.edu.hk \quad siruihan@ust.hk \quad 	yikeguo@ust.hk \quad dapengwu@cityu.edu.hk}

\newcounter{noteHWctr} 

\newcommand{\oursolution}{CurioSFT}

\begin{document}

\maketitle
\begingroup
\renewcommand{\thefootnote}{} 
\footnotetext{\footnotesize $^{*}$Equal contribution \hspace{1.2em} $^{\dagger}$Corresponding author}
\endgroup
\setcounter{footnote}{0}
\begin{abstract}
The standard post-training recipe for large reasoning models, supervised fine-tuning followed by reinforcement learning (SFT-then-RL), may limit the benefits of the RL stage: while SFT imitates expert demonstrations, it often causes overconfidence and reduces generation diversity, leaving RL with a narrowed solution space to explore. Adding entropy regularization during SFT is not a cure-all; it tends to flatten token distributions toward uniformity, increasing entropy without improving meaningful exploration capability. In this paper, we propose \textbf{\oursolution}, an entropy-preserving SFT method designed to enhance exploration capabilities through intrinsic curiosity. It consists of (a) \textit{Self-Exploratory Distillation}, which distills the model toward a self-generated, temperature-scaled teacher to encourage exploration within its capability; and (b) \textit{Entropy-Guided Temperature Selection}, which adaptively adjusts distillation strength to mitigate knowledge forgetting by amplifying exploration at reasoning tokens while stabilizing factual tokens. Extensive experiments on mathematical reasoning tasks demonstrate that, \textit{in SFT stage}, \oursolution~outperforms the vanilla SFT by \textbf{2.5 points} on in-distribution tasks and \textbf{2.9 points} on out-of-distribution tasks. We also verify that exploration capabilities preserved during SFT successfully translate into concrete gains \textit{in RL stage}, yielding an average improvement of \textbf{5.0 points}. Code is available at https://anonymous.4open.science/r/CurioSFT.
\end{abstract}

\begin{figure}[t]
  \includegraphics[width=0.90\columnwidth]{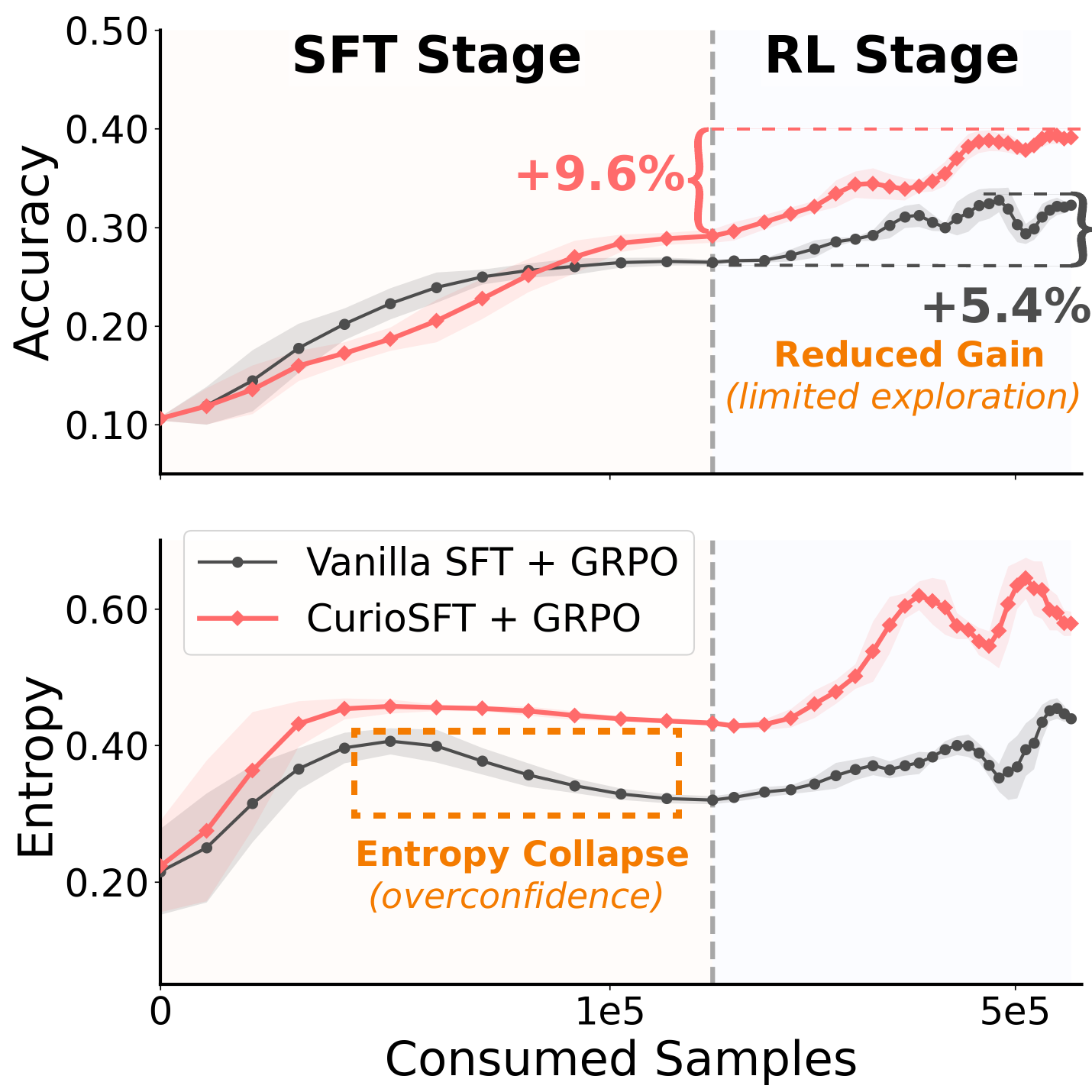}
  \caption{Evaluation entropy and accuracy (Avg@8 in AIME 2024) across the SFT and RL stages. \oursolution~mitigates entropy collapse during SFT, and yields larger accuracy gains in the RL stage.}
  \label{fig:intro_entropy}
  \vspace{-6mm}
\end{figure}

\section{Introduction}
Recent breakthroughs~\citep{gpt-oss,dpskr1} establish "SFT-then-RL" as the de-facto paradigm for enhancing large reasoning models on automatically verifiable tasks, such as mathematical reasoning~\citep{math_reasoning,kimi-k2}, code generation~\citep{code_reasoning}, and agentic search~\citep{dr_reasoning,search-r1}. In this paradigm, the Supervised Fine-tuning (SFT) stage aligns the model with domain-specific patterns and required knowledge, thereby providing a superior initialization for the subsequent Reinforcement Learning (RL) stage.

However, this paradigm faces a critical challenge: the Cross Entropy loss in SFT rigidly maximizes the likelihood of expert tokens, which inevitably drives the model toward overconfidence~\citep{overconf_1,overconf_2} and constricts the exploration space as training progresses. As shown in Figure~\ref{fig:intro_entropy}, we use token entropy to quantify exploration capability and observe a rapid collapse over the SFT stage. Counter-intuitively, the SFT stage locks the model into a low-diversity mode, severely constraining the search space for the subsequent RL stage. This limitation often leads to marginal gains or even degradation compared to direct RL, aligning with recent findings~\citep{sft_bad,sft_bad_extra}.

A straightforward approach is to regularize the SFT stage with entropy loss~\citep{jost2006entropy} on each token. However, trivially maximizing entropy will indiscriminately smooth the token probability and introduce \textit{ungrounded entropy}, damaging exploration capability and leading to unsatisfactory or degraded performance. Concretely, it fails to distinguish token roles: forcing entropy on factual tokens disrupts knowledge retention, while neglecting critical reasoning tokens (e.g., ``wait'') where exploration is truly beneficial. This discrepancy highlights the need for a method that \textit{substantially enhances exploration capabilities without compromising the model's intrinsic knowledge}.

To achieve this, we introduce \textbf{\oursolution}, a novel entropy-preserving SFT method designed to enhance exploration with knowledge retention. This method consists of two key components: \textit{Self-Exploratory Distillation} and \textit{Entropy-Guided Temperature Selection}. Building on self-distillation~\citep{sd_1,sd_2}, \textit{Self-Exploratory Distillation} exploits the monotonic relationship between token entropy and sampling temperature to construct a higher-entropy ``teacher distribution'' via an increased temperature. Aligning with this high-entropy teacher allows the model to selectively expand its search space under the guidance of its own curiosity. Crucially, to account for the distinct roles of tokens during reasoning, \textit{Entropy-Guided Temperature Selection} dynamically modulates the temperature based on token-level uncertainty. This mechanism selectively encourages exploration at critical reasoning tokens while maintaining deterministic targets for factual tokens, thereby effectively mitigating the risk of knowledge forgetting.

Extensive experiments on mathematical reasoning benchmarks demonstrate that, \oursolution~not only effectively preserves entropy but also achieves superior performance across both in-distribution and out-of-distribution (OOD) tasks, outperforming vanilla SFT by an average of \textbf{2.5 points} and \textbf{2.9 points}, respectively. We empirically verify that the exploration capabilities preserved during SFT successfully translate into concrete gains in the RL stage. To this end, our contributions are three-fold:
\begin{itemize}
    \item  We empirically analyze the drawbacks of entropy loss in SFT, including \textit{exploration degradation} and \textit{knowledge forgetting}. To address these, we propose \oursolution, which preserves entropy while improving overall performance during the SFT stage.
    
    \item We propose \textit{Self-Exploratory Distillation} to preserve entropy while improving effective exploration by aligning with a self-generated, temperature-scaled teacher. We further introduce \textit{Entropy-Guided Temperature Selection} to adapt token-level temperatures, selectively encouraging exploration and mitigating knowledge forgetting.
    
    \item Extensive experiments on mathematical reasoning benchmarks demonstrate that, \oursolution~not only improves performance in SFT but also enhances the exploration capability, significantly improving the performance of RL stage. We also verify the robustness of \oursolution~across models and hyperparameters.
\end{itemize}

\begin{figure*}[t]
  \centering

  \begin{subfigure}[b]{0.32\textwidth}
    \centering
    \includegraphics[width=\linewidth]{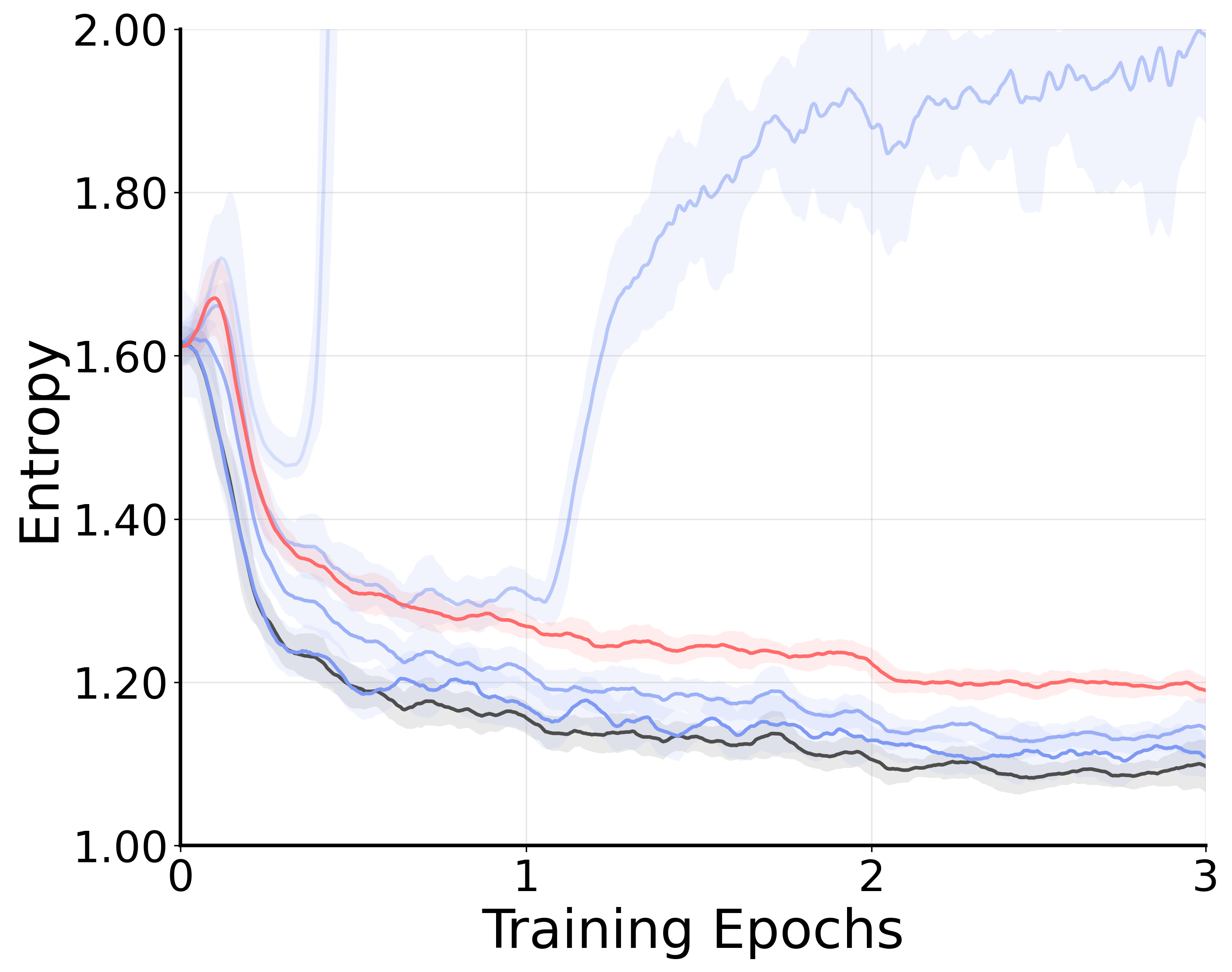}
    \vspace{-6mm}
    \caption{Expert Token Entropy (Top 20\%)}
    \label{fig:entropy-a}
  \end{subfigure}
  \begin{subfigure}[b]{0.32\textwidth}
    \centering
    \includegraphics[width=\linewidth]{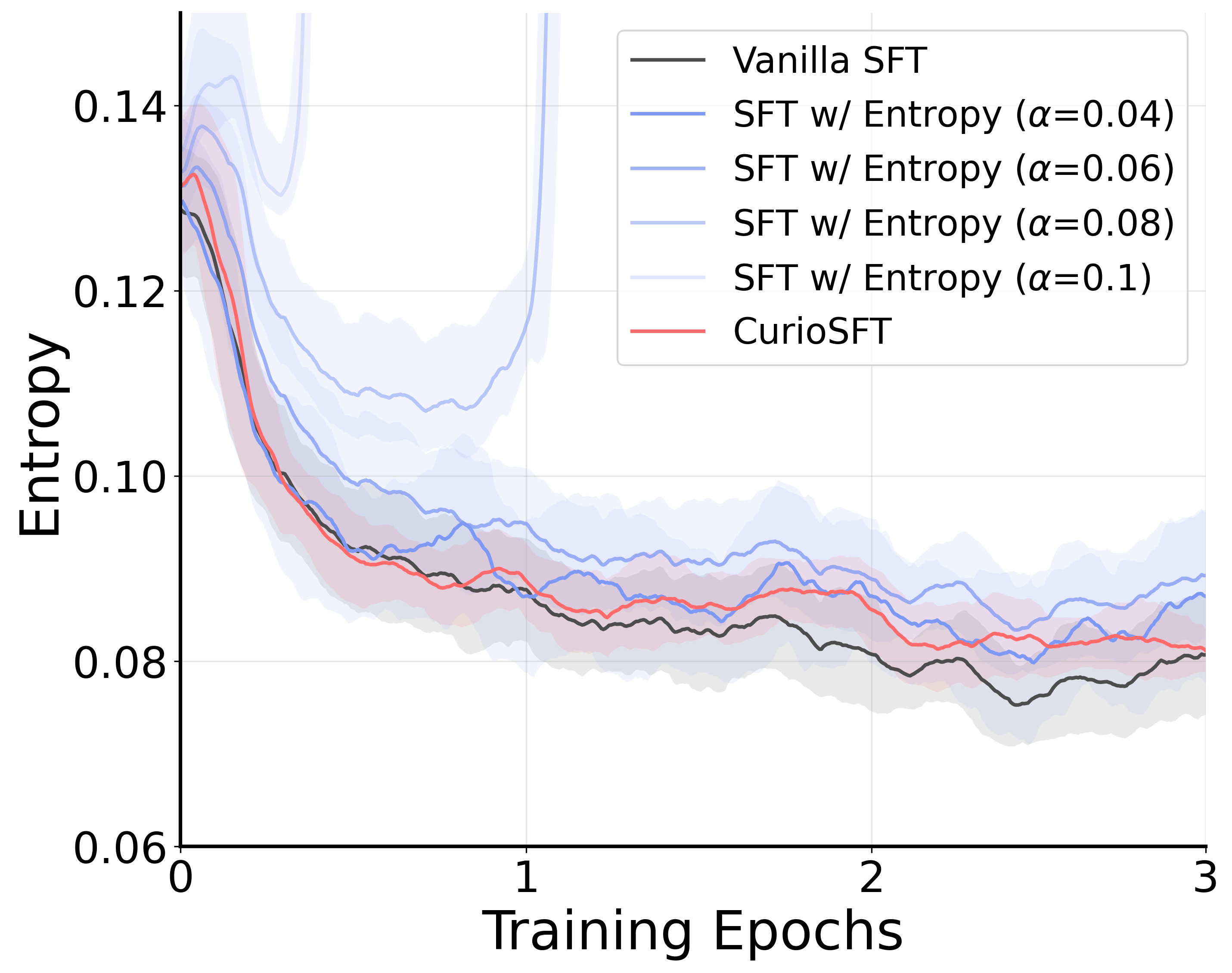}
    \vspace{-6mm}
    \caption{Expert Token Entropy (Bottom 80\%)}
    \label{fig:entropy-b}
  \end{subfigure}
  \begin{subfigure}[b]{0.32\textwidth}
    \centering
    \includegraphics[width=\linewidth]{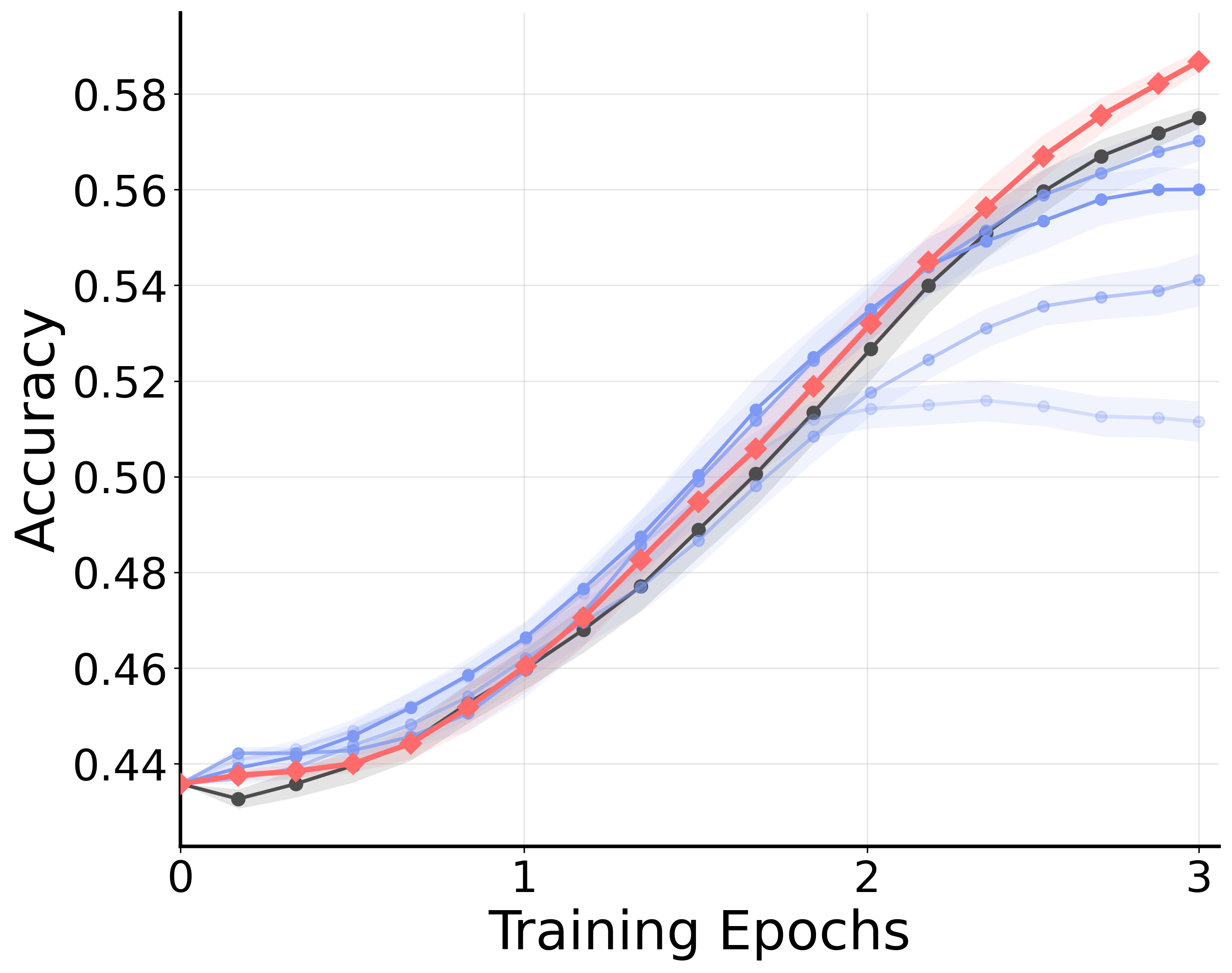}
    \vspace{-6mm}
    \caption{Pass@32 for AIME 2024}
    \label{fig:entropy-c}
  \end{subfigure}

  \vspace{-1mm}
\caption{\textbf{Expert token entropy and evaluation accuracy during the SFT stage.}
Compared to vanilla entropy loss, which uniformly encourages entropy across tokens,
\oursolution~selectively increases entropy on high-entropy tokens while preserving low-entropy ones.
We further observe that the increased token entropy induced by entropy loss does not translate into actual improvements in Pass@32 performance in our experiments.}
\vspace{-5mm}
  \label{fig:entropy-both}
\end{figure*}

\section{Preliminaries}
\paragraph{SFT Loss.} Let $\mathcal{D}$ denote the SFT dataset, which contains multiple questions ${\mathbf{x}}$ and corresponding expert responses $\mathbf{y}$. The optimization objective during the SFT stage is to minimize the cross-entropy loss between the model distribution and a one-hot target distribution induced by expert tokens, as:
\begin{align}
\mathcal{L}_{\text{SFT}}(\theta)=- \log \pi_\theta\big(y_t \mid \mathbf{s}_t\big),
\end{align}
where $\pi_\theta$ is the fine-tuned model and $\mathbf{s}_t$ is the concatenation of the question $\mathbf{x}$ and the previously generated tokens $\mathbf{y}_{<t}$. 

\paragraph{Entropy Loss.} 
To encourage output diversity and prevent over-confidence, prior works~\citep{math_reasoning,openreasonerzeor} introduce an entropy loss term as a regularizer during the RL stage, as:
\begin{equation}
\begin{aligned}
    \mathcal{L}_{\text{entropy}}(\theta) &= \alpha  \cdot -H(\pi_\theta(\cdot \mid \mathbf{s}_t)) \\
    &= \alpha \cdot \sum_{y \in \mathcal{V}} \pi_\theta(y \mid \mathbf{s}_t) \log \pi_\theta(y \mid \mathbf{s}_t),
\end{aligned}
\end{equation}
where $\mathcal{V}$ denotes the vocabulary of the fine-tuned model, $\alpha$ is the loss weight. However, applying entropy regularization solely at the RL stage often yields marginal benefits, as the preceding SFT stage has already driven the model into a low-entropy mode. Consequently, it is important to preserve entropy and encourage exploration during the SFT stage itself. Yet, deploying entropy loss in the SFT stage presents a fundamental challenge: \textit{unlike the online nature of RL, SFT is an offline process where the model cannot judge whether the increased entropy leads to valid reasoning paths or merely introduces noise}. In the following section, we discuss two key limitations arising from this ``blind'' regularization through empirical observations.

\section{The Pitfall of Entropy Loss in SFT}
\label{sec:entropy}

\begin{table}[t]
\centering
\caption{OOD performance comparison.}
\vspace{-2mm}
\label{tab:ood_entropy_top20}
\setlength{\tabcolsep}{6pt}
\renewcommand{\arraystretch}{1.1}
\resizebox{\columnwidth}{!}{
\begin{tabular}{lcccc}
\toprule
\textbf{Method} & \textbf{GPQA} & \textbf{MMLU-Pro} & \textbf{ARC-C} & \textbf{Avg.$\uparrow$} \\
\midrule
Vanilla SFT & 27.7 & 47.5 & 78.8 & 51.3 \\
SFT w/ Entropy & 28.0 & 45.9 & 77.2 & 50.4 \\
SFT w/ Entropy (Top 20\%) & \textbf{29.6} & \textbf{47.9} & \textbf{79.3} & \textbf{52.3} \\
\bottomrule
\end{tabular}
}
\vspace{-5mm}
\end{table}

\paragraph{Ungrounded entropy degrades exploration capability.}
In the SFT stage, many tokens in the dataset are relatively \emph{unfamiliar} to the current model, reflected by their lower output probability compared to online sampling tokens (offline 71\% vs.\ online 76\%). As a result, the entropy loss becomes highly sensitive to its weight $\alpha$: as shown in Figure~\ref{fig:entropy-a} and Figure~\ref{fig:entropy-b}, when $\alpha$ is too small, entropy barely increases; when $\alpha$ is too large (e.g., $\alpha \ge 0.08$ in our setting), the objective can push some token distributions toward near-uniformity, causing an ``entropy explosion'' that destabilizes training. Even with a seemingly reasonable choice (e.g., $\alpha=0.06$), entropy loss does not reliably improve performance (Figure~\ref{fig:entropy-c}). The key reason is that the entropy loss indiscriminately pushes the token distribution toward higher entropy, without distinguishing between expanding a valid reasoning path and merely injecting noise. Consequently, increased token entropy does not translate into better reasoning performance and may even harm effective exploration.

\paragraph{Token-agnostic regularization amplifies knowledge forgetting.}
Recent works suggest that exploration in LLMs is driven by a relatively small subset of high-entropy tokens, while most tokens remain low-entropy to preserve knowledge~\citep{80_20}. As shown in the Figure~\ref{fig:entropy-a} and Figure~\ref{fig:entropy-b}, when we partition tokens by entropy (e.g., top 20\% vs.\ the remaining 80\%), naive entropy loss increases entropy in \emph{both} groups. This is because maximizing entropy is equivalent to minimizing the KL divergence to a uniform distribution for \emph{all} tokens (detailed proof in Appendix~\ref{apd:entropy_derivation}). Such token-agnostic regularization is detrimental to the model's original knowledge and reasoning behavior. As shown in Table~\ref{tab:ood_entropy_top20},
restricting entropy loss to the top 20\% high-entropy tokens yields significantly better performance on knowledge-intensive OOD tasks. Empirically, low-entropy tokens often correspond to deterministic factual content (e.g., nouns and numbers), where stability is crucial; forcing entropy at these positions weakens factual consistency and can induce knowledge forgetting. In contrast, high-entropy tokens tend to act as reasoning connectors (e.g., ``wait'', ``alternatively''), which are the natural targets for exploration.

 \begin{figure*}[t]
  \centering 
  \includegraphics[width=0.88\textwidth]{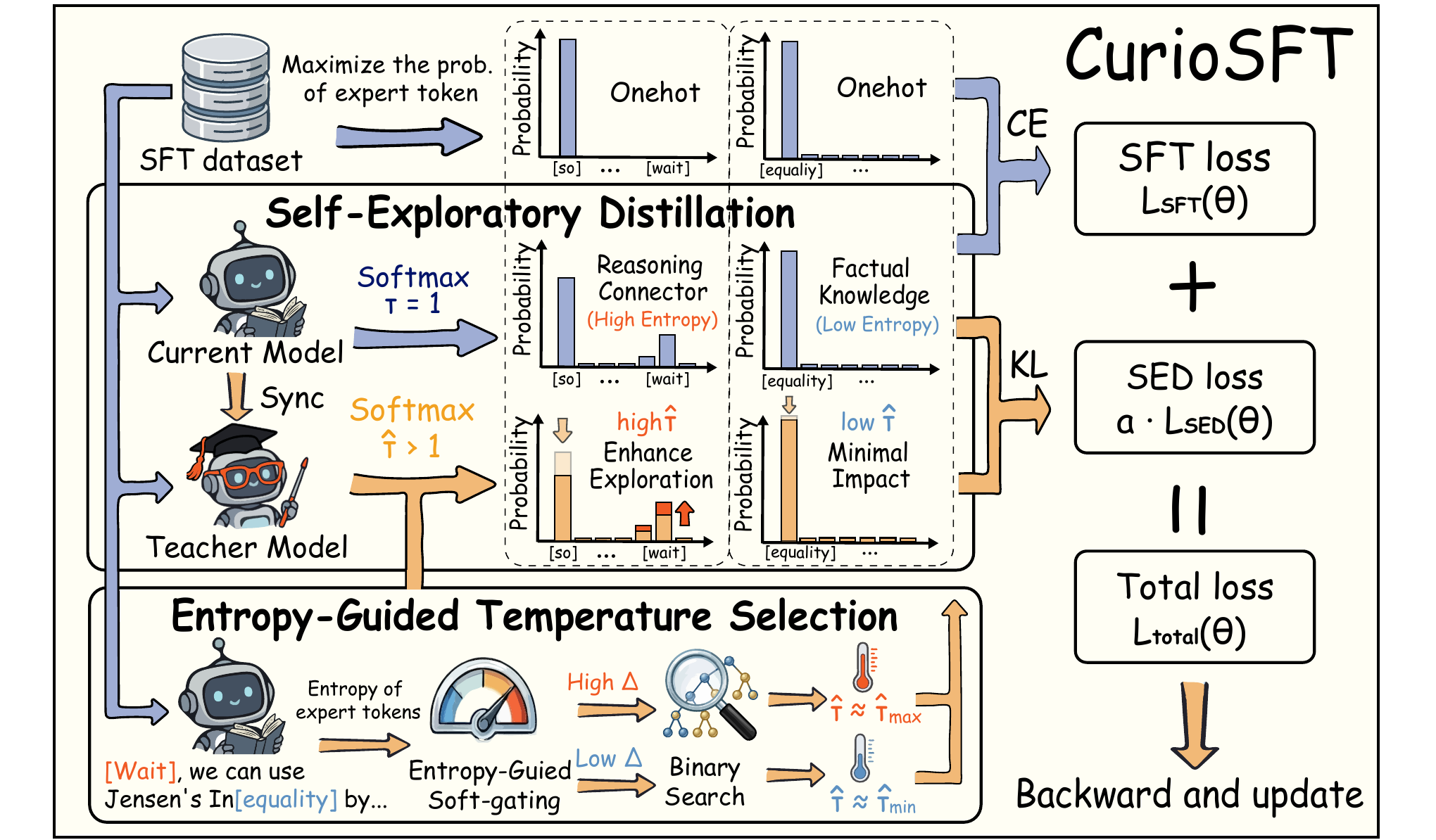}
  \vspace{-1mm}
  \caption{Proposed solution: \oursolution}
  \label{fig:solution} 
  \vspace{-5mm}
\end{figure*}

\section{Proposed Solution: \oursolution}
\label{sec:method}
To address the limitations of vanilla entropy loss, we introduce \oursolution, an entropy-preserving SFT method that enables models to learn expert behaviors while maintaining exploration capability. As shown in Figure~\ref{fig:solution}, our method consists of two key components: \textbf{Self-Exploratory Distillation} (Section~\ref{sec:sed}) and \textbf{Entropy-Guided Temperature Selection} (Section~\ref{sec:egat}).

\subsection{Self-Exploratory Distillation}
\label{sec:sed}
Frontier LLMs internalize extensive world knowledge during pre-training, and exhibit implicit exploration capabilities that enable the generation of diverse trajectories~\citep{implicit_prm1,implicit_prm2}. Motivated by self-distillation~\citep{solution_sd}, we exploit this intrinsic capacity by minimizing the divergence between the policy of current model and a self-generated, higher-entropy teacher distribution. Formally, an LLM policy is obtained by applying the softmax function to the output logits $z_\theta(\cdot\mid\mathbf{s}_t)$, as:
\begin{equation}
    \textcolor{SC}{\pi_\theta(y \mid \mathbf{s}_t; \textcolor{SC}{\tau})}=\frac{\exp\big(z_\theta(y \mid \mathbf{s}_t) / \textcolor{SC}{\tau}\big)}{\sum_{y' \in \mathcal{V}} \exp\big(z_\theta(y' \mid \mathbf{s}_t) / \textcolor{SC}{\tau}\big)},
\end{equation}
where $\textcolor{SC}{\tau}$ denotes the sampling temperature and is typically fixed at $1.0$ in standard SFT training. Leveraging the property that the entropy is monotonically increasing with respect to \textcolor{SC}{$\tau$} (see proof in Appendix~\ref{app:entropy_proof}), we can construct a higher-entropy teacher distribution $\pi^{\text{tch}}$ by simply rescaling the logits with a larger temperature $\textcolor{TC}{\hat{\tau}} > \textcolor{SC}{\tau}$, which satisfies $H\big(\textcolor{TC}{{\pi^{\text{tch}}}(\cdot \mid \mathbf{s}_t; \hat{\tau})}\big) > H\big(\textcolor{SC}{\pi_\theta(\cdot \mid \mathbf{s}_t; \tau)}\big)$. Subsequently, we can introduce a regularization loss that aligns the model with this higher-entropy teacher to achieve entropy preservation.

We theoretically prove that the constructed teacher distribution is the unique higher-entropy distribution that minimizes the KL divergence to the current policy under the entropy-increase constraint (see proof in Appendix~\ref{sec:t_scaling}). Adopting this teacher offers two key advantages: \textit{(a) Curiosity-driven exploration.} The temperature-scaled teacher strictly \textit{preserves the relative order} of token probabilities. Distilling the student toward this teacher therefore encourages exploration only over tokens that lie within the model’s \textit{valid exploration space}, rather than injecting uninformative entropy to all tokens. \textit{(b) Reduced Knowledge forgetting.} Prior works suggest that training data with lower divergence from the current model is associated with less knowledge forgetting~\citep{shenfeld2025rlsrazoronlinereinforcement}. By distilling toward a higher-entropy teacher that remains close to the current policy, the model can enhance its exploration ability while mitigating knowledge forgetting.

To ensure the stability of the teacher model, we deploy a separate teacher model parameterized by $\phi$, and the teacher distribution is denoted as:
\begin{equation}
    \textcolor{TC}{\pi_\phi^{\text{tch}}(y \mid \mathbf{s}_t;\hat{\tau})} = \frac{\exp(z^{\text{tch}}_\phi(y \mid \mathbf{s}_t) / \textcolor{TC}{\hat{\tau}})}{\sum_{y' \in \mathcal{V}} \exp(z_\phi^{\text{tch}}(y' \mid \mathbf{s}_t) / \textcolor{TC}{\hat{\tau}})},
\end{equation}
where $\textcolor{TC}{\hat{\tau}}$ is the teacher sampling temperature. Using a separate teacher updated more slowly than the student stabilizes the distillation target,
preventing rapid fluctuations in the teacher distribution during training. The parameters of the teacher model $\phi$ are synchronized with the current policy $\theta$ every $n$ steps using an exponential moving average with a decay factor of $\mu$ (Algo.~\ref{alg:main_training} Line 17). Finally, we formulate the self-exploratory distillation objective using the K2-loss~\citep{k3}, defined as:
\begin{equation}
\label{eq:loss_sd}
    \mathcal{L}_{\text{SED}}(\theta) = \frac{1}{2} \sum_{t=1}^{T} \left( \log \frac{\textcolor{SC}{\pi_\theta(y \mid \mathbf{s}_t;\tau)}}{\textcolor{TC}{\pi_\phi^{\text{tch}}(y \mid \mathbf{s}_t;\hat{\tau})}} \right)^2 .
\end{equation}
Finally, the overall optimization objective is defined as a combination
of the SFT loss and the self-distillation loss, with a coefficient $\alpha$:
\begin{equation}
\label{eq:loss_all}
    \mathcal{L}_{\text{total}}(\theta)
    = \mathcal{L}_{\text{SFT}}(\theta)
    + \alpha \cdot \mathcal{L}_{\text{SED}}(\theta).
\end{equation}

\subsection{Entropy-Guided Temperature Selection}
\label{sec:egat}

\begin{algorithm}[t]
\caption{Training with \oursolution}
\label{alg:main_training}
\begin{algorithmic}[1]
    \Require SFT dataset $\mathcal{D}$, base model $\pi_\theta$, teacher model $\pi_\phi^{\text{tch}}$
    \State Initialize teacher parameters: $\phi \leftarrow \theta$
    
    \For{$step = 1,2,\dots$}
        \State Sample training data $(\mathbf{x}, \mathbf{y}) \sim \mathcal{D}$
        \State Compute model logits $z_\theta(\cdot \mid \mathbf{s}_t)$
        \State \textcolor{gray}{// Entropy-guided temperature selection}
        \State Compute teacher logits $z_\phi^{\text{tch}}(\cdot \mid \mathbf{s}_t)$
        \State Compute entropy $H_t$ for each token $t$
        \State Compute entropy increment $\Delta_t$ by Eq.~(\ref{eq:entropy_increment})
        \State $\hat{\tau}_t \leftarrow \Call{BinarySearch}{z_\phi^{\text{tch}}(\cdot \mid \mathbf{s}_t), \Delta_t}$ 
        \State \textcolor{gray}{// Self-exploratory distillation}
        \State $\pi_\phi^{\text{tch}}(\cdot \mid \mathbf{s}_t) \leftarrow \text{Softmax}\!\big(z_\phi^{\text{tch}}(\cdot \mid \mathbf{s}_t) / \hat{\tau}_t\big)$
        
        \State Compute $\mathcal{L}_{\text{SED}}(\theta)$ by Eq.~(\ref{eq:loss_sd})
        \State Compute $\mathcal{L}_{\text{total}}(\theta)$ by Eq.~(\ref{eq:loss_all})
        \State Update $\theta \leftarrow \theta - \eta \nabla_\theta \mathcal{L}_{\text{total}}$
        
        \State \textcolor{gray}{// Teacher update}
        \If{$step\pmod{n} = 0$}
            \State $\phi \leftarrow (1-\mu) \phi +  \mu\theta$
        \EndIf
    \EndFor
\end{algorithmic}
\end{algorithm}

The sampling temperature $\textcolor{TC}{\hat{\tau}}$ for the teacher distribution is a crucial parameter in \oursolution. A higher value encourages the model to align with a higher-entropy teacher distribution, whereas a lower value keeps the update closer to standard SFT. As discussed in Section~\ref{sec:entropy}, \textit{high-entropy tokens} typically act as branching points that benefit from exploration, while \textit{low-entropy tokens} encode deterministic facts, where stability is preferred. To respect such heterogeneity, we adaptively assign temperatures based on the token uncertainty. We first compute a token-level entropy increment $\Delta_t$, then determine the temperature required via binary search. Finally, we use these specialized temperatures to construct the teacher distribution in Eq.~(\ref{eq:loss_sd}).

Specifically, given the current token entropy $H_t = H(\pi_\phi^{\text{tch}}(\cdot \mid \mathbf{s}_t))$, we compute the entropy increment $\Delta_t$ via:
\begin{equation}
\label{eq:entropy_increment}
    \Delta_t = \Delta_{\max} \cdot \text{Sigmoid}\big( \gamma \cdot (H_t - H_{\text{pivot}}) \big),
\end{equation}
where $\Delta_{\max}$ is the maximum allowable entropy increase, $\gamma$ is a scaling factor, and $H_{\text{pivot}}$ is the \textit{entropy pivot} that decides the activation threshold of exploration: increasing $H_{\text{pivot}}$ makes exploration more selective, while decreasing it expands the set of tokens receiving substantial entropy increase. We adopt soft-gating rather than a hard mask to avoid brittle thresholding and introduce a smooth, adaptive margin: for \textit{high-entropy tokens} ($H_t \gg H_{\text{pivot}}$), the sigmoid term approaches $1$, pushing the target entropy toward $H_t + \Delta_{\max}$ and thus strongly encouraging diversity at those positions. Conversely, for \textit{low-entropy tokens} ($H_t \ll H_{\text{pivot}}$), the sigmoid term approaches $0$, keeping the target entropy close to $H_t$ and thereby minimizing interference with the model's established knowledge.

Given the entropy increment $\Delta_t$, our goal is to find a  temperature for teacher distribution that matches the desired entropy target, as:
\[
\min_{\textcolor{TC}{\hat{\tau}_t}} \left| H\big(\pi_{\phi}^{\text{tch}}(\cdot \mid \mathbf{s}_t; \textcolor{TC}{\hat{\tau}_t})\big)
- \big( H_t + \Delta_t \big) \right| < \epsilon,
\]
where $\epsilon$ is a small constant. Given that entropy is monotonically increasing with respect to temperature $\tau$ (see proof in Appendix~\ref{app:entropy_proof}), we can efficiently solve for $\textcolor{TC}{\hat{\tau}_t} \in [\hat{\tau}_{\min}, \hat{\tau}_{\max}]$ using a binary search. To minimize computational overhead during training, we implement the temperature search as a fully vectorized operation and approximate the entropy using only the top-$k$ logits, which focuses computation on the most influential tokens while significantly accelerating the calculation. Details of binary search are provided in Appendix~\ref{app:binary_search}.

\section{Experiments}

\begin{table*}[t]
\centering
\caption{\textbf{Performance comparison in the SFT stage.}}
\label{tab:sft_results}
\vspace{-2mm}
\setlength{\tabcolsep}{3.5pt}
\renewcommand{\arraystretch}{1.08}
\resizebox{\textwidth}{!}{%
\begin{tabular}{lccccc>{\columncolor{orange!10}}c|ccc>{\columncolor{blue!10}}c|cc}
\toprule
\multirow{2}{*}{\textbf{Method}} &
\multicolumn{6}{c}{\textbf{In-Distribution Tasks}} &
\multicolumn{4}{c}{\textbf{Out-of-Distribution Tasks}} &
\multicolumn{2}{c}{\textbf{Other}} \\
\cmidrule(lr){2-7}\cmidrule(lr){8-11}\cmidrule(lr){12-13}
& \textbf{AIME25/24} & \textbf{AMC23} & \textbf{MATH.} & \textbf{Miner.} & \textbf{Olymp.} & \textbf{Avg.$\uparrow$}
& \textbf{GPQA} & \textbf{MMLU.} & \textbf{ARC-C}  & \textbf{Avg.$\uparrow$}
& \textbf{Entropy$\uparrow$} & \textbf{Speed$\downarrow$} \\
\midrule

\multicolumn{13}{c}{\textbf{Base Model}} \\
Qwen2.5-Math-7B  & 4.6/8.3 & 35.5 & 50.1 & 12.1 & 16.5 & 21.2 & 26.2 & 32.4 & 63.2 & 40.6 & 0.15  & --      \\
\midrule

\multicolumn{13}{c}{\textbf{Vanilla SFT with Regularization}} \\
Vanilla SFT      & 22.9/26.7 & 59.6 & 85.8 & 45.5 & 50.4 & 48.5  & 27.7& 47.5 & 78.8 & 51.3 & 0.31& \textbf{43.2}  \\
SFT with Entropy & 23.3/25.4 & 60.3 & 86.1 & 44.2 & 49.2 & 48.1 & 28.0& 45.9 & 77.2  & 50.4 & 0.36& 44.6  \\
SFT with KL      & 21.6/24.6 & 58.2 & 83.9 & 45.2 & 46.3 & 46.6 & 27.4& 47.8 & 78.0  & 51.1 & 0.30 & 50.2 \\
\midrule

\multicolumn{13}{c}{\textbf{SFT Variants}} \\
GEM~\citep{gem}  & 24.6/26.7 & 60.2 & 85.7 & 47.7 & 50.9 & 49.3 & 30.6 & 49.0 & \textbf{81.4}  & 53.7 & \textbf{0.66}& 44.5  \\
DFT~\citep{dft}  & 23.3/25.0 & 59.3 & 86.6 & 46.4 & 49.9 & 48.4 & 31.1 & 48.8 & 79.0 & 53.0 & 0.29 & 43.7 \\
PSFT~\citep{psft}& 25.0/28.8 & \textbf{60.4} & 86.9 & 48.3 & 52.6 & 50.3 & 29.9& 47.8 & 76.7  & 51.5 & 0.32& 45.2  \\
\midrule

\multicolumn{13}{c}{\textbf{Our Method}} \\
\oursolution & \textbf{26.3/29.6} & 59.9 & \textbf{87.0} & \textbf{49.8} & \textbf{53.2} & \textbf{51.0}
            & \textbf{31.7}  & \textbf{49.5} & 81.3 & \textbf{54.2}
            & 0.43 & 53.9  \\
\textit{Impr. vs SFT} & \textcolor{darkgreen}{\textbf{+3.4}}/\textcolor{darkgreen}{\textbf{+2.9}} & \textcolor{darkgreen}{\textbf{+0.3}} & \textcolor{darkgreen}{\textbf{+1.2}} & \textcolor{darkgreen}{\textbf{+4.3}} & \textcolor{darkgreen}{\textbf{+2.8}} & \textcolor{darkgreen}{\textbf{+2.5}} & \textcolor{darkgreen}{\textbf{+4.0}}& \textcolor{darkgreen}{\textbf{+2.0}} & \textcolor{darkgreen}{\textbf{+2.5}}  & \textcolor{darkgreen}{\textbf{+2.9}} & \textcolor{darkgreen}{\textbf{+0.12}} & \textcolor{black}{+10.7}  \\

\midrule
\textit{w/o Adaptive Temp.} & 24.6/26.7 & 59.0 & 86.4 & 48.8 & 53.1 & 49.8  & 29.5 &47.3  & 79.8&   52.2 & 0.45 & 51.9  \\
\textit{w/o Separate Teacher}     & 25.0/28.8 & 58.2 & 85.4 & 49.3 & 52.8 & 49.9 & 30.8  & 48.2 &80.2 & 53.0  & 0.39 & 45.5 \\
\bottomrule
\end{tabular}%
}
\vspace{-3mm}
\end{table*}

\paragraph{Training.} 
We use OpenR1-Math-46K~\citep{luffy} as the training dataset for both SFT and RL stages, which contains 46K mathematics problems and corresponding answers generated by DeepSeek-R1~\citep{dpskr1}. We adopt Qwen2.5-Math-7B~\citep{qwen2.5} as the base model, except in Section~\ref {sec:robustness}, where we study robustness across different models. For the SFT stage, we train for 3 epochs, and for the RL stage, we train for 500 steps using GRPO~\citep{grpo}. We set the entropy pivot $H_{\text{pivot}} = 1.2$ nats, the scaling factor $\gamma = 2.0$, the maximum entropy increment $\Delta_{\max} = 0.5$ nats, and the loss weight $\alpha = 1$. The temperature clip range is $\hat{\tau}_{\min}=1.1,\hat{\tau}_{\max}=1.5$. Further training details and hyperparameters are provided in Appendix~\ref{appendix-exp-setting}.

\paragraph{Evaluation.}
To evaluate the model's performance, we utilize six challenging and widely used mathematical reasoning benchmarks, including: AIME 2024, AIME 2025, AMC~\citep{numinamath}, Math-500~\citep{math500}, Olympiad Bench~\citep{olympiadbench}, and Minerva~\citep{minerva}. Furthermore, to assess the extent of knowledge retention and generalization ability, we evaluate the model on three OOD benchmarks: ARC-Challenge~\citep{arc_c}, GPQA-Diamond~\citep{gpqa}, and MMLU-Pro~\citep{mmlu_pro}. We set the sampling temperature to $0.6$ and $\text{Top\_p}=0.95$, and keep other settings consistent with the training. Due to the large number of questions in MMLU-Pro, we generate a single response per question for MMLU-Pro, while using $8$ responses per question for all other benchmarks, and compute the average accuracy as the final reported metric.

\subsection{Effectiveness of \oursolution}
\label{sec:sft_results}
\paragraph{Baselines.} We compare \oursolution~against two categories of baselines: \textit{(a) Vanilla SFT with Regularization}, which includes adding entropy loss and KL divergence constraints relative to the base model. \textit{(b) Advanced SFT Variants,} which are designed to mitigate over-confidence and encourage diversity. Specifically, PSFT~\citep{psft} employs trust-region constraints to limit policy shift; DFT~\citep{dft} re-weights token updates based on model internal knowledge; and GEM~\citep{gem} maintains diversity by encouraging the model to diverge from over-confident distributions.

\begin{table*}[t]
\centering
\caption{\textbf{Performance comparison in the RL stage.}
}
\label{tab:rl_results}
\setlength{\tabcolsep}{4pt}
\renewcommand{\arraystretch}{1.1}
\resizebox{\textwidth}{!}{%
\begin{tabular}{lcccccc>{\columncolor{orange!15}}c|ccc>{\columncolor{blue!15}}c}
\toprule
\multirow{2}{*}{\textbf{Model}} & \multicolumn{7}{c}{\textbf{In-Distribution Benchmarks}} & \multicolumn{4}{c}{\textbf{Out-of-Distribution Benchmarks}} \\
\cmidrule(lr){2-8} \cmidrule(lr){9-12}
 & \textbf{AIME25} & \textbf{AIME24} & \textbf{AMC23} & \textbf{MATH.} & \textbf{Miner.} & \textbf{Olymp.} & \textbf{Avg.} & \textbf{GPQA} & \textbf{MMLU.} & \textbf{ARC-C} & \textbf{Avg.} \\
\midrule
\multicolumn{12}{c}{\textbf{Vanilla GRPO}} \\
Vanilla GRPO  & 18.8  & 20.8  & 62.8  & 84.7  & 46.7  & 50.0  & 47.3  & 40.3  & 50.1  & 84.1  & 58.2 \\
\midrule
\multicolumn{12}{c}{\textbf{Hybrid SFT with RL}} \\
LUFFY~\citep{luffy}  & 29.4  & 23.1  & 65.6  & 87.6  & 49.5  & 57.2  & 52.1  & 39.9  & 53.0  & 80.5  & 57.8 \\
Prefix-RFT~\citep{prefix-rft}  & 26.4  & 31.8  & 68.2  & 88.4  & 50.9  & 55.7  & 53.6  & 39.1  & 52.1  & 84.0  & 58.4 \\
RL-PLUS~\citep{rl_plus}  & 25.9  & 33.4  & 68.1  & 90.2  & 52.3  & 58.8  & 54.8  & 40.4  & 54.7  & 82.3  & 59.1 \\
\midrule
\multicolumn{12}{c}{\textbf{SFT-then-RL Paradigm}} \\
SFT + RL  & 24.6  & 32.1  & 68.1  & 88.2  & 51.9  & 57.1  & 53.7  & 41.1  & 53.3  & 83.5  & 59.3 \\
GEM~\citep{gem} + RL  & 27.5  & 34.6  & 71.1  & 90.8  & 52.1  & 61.3  & 56.2  & 40.3  & 54.1  & 83.7  & 59.4 \\
DFT~\citep{dft} + RL  & 24.2  & 31.3  & 69.8  & 91.3  & 50.5  & 59.0  & 54.4  & 40.4  & 54.0  & 84.9  & 59.8 \\
PSFT~\citep{psft} + RL  & 26.7  & 36.7  & 71.5  & 91.2  & 52.0  & 62.7  & 56.8  & 42.8  & 55.4  & 85.1  & 61.1 \\
\midrule
\oursolution + RL & \textbf{30.4} & \textbf{39.2} & \textbf{72.7} & \textbf{91.7} & \textbf{54.9}  & \textbf{63.2}  & \textbf{58.7} & \textbf{43.2} & \textbf{56.0} & \textbf{85.9} & \textbf{61.7} \\
\textit{Impr. vs SFT+RL} & \textcolor{darkgreen}{\textbf{+5.8}}  & \textcolor{darkgreen}{\textbf{+7.1}}  & \textcolor{darkgreen}{\textbf{+4.6}}  & \textcolor{darkgreen}{\textbf{+3.5}} & \textcolor{darkgreen}{\textbf{+3.0}} & \textcolor{darkgreen}{\textbf{+6.1}}  & \textcolor{darkgreen}{\textbf{+5.0}}& \textcolor{darkgreen}{\textbf{+2.1}}  & \textcolor{darkgreen}{\textbf{+2.7}} & \textcolor{darkgreen}{\textbf{+2.4}}  & \textcolor{darkgreen}{\textbf{+2.4}} \\
\bottomrule
\end{tabular}%
}
\vspace{-3mm}
\end{table*}

\paragraph{Results.}
Table~\ref{tab:sft_results} shows that \oursolution~achieves the best overall performance on both in-distribution and OOD benchmarks. Compared to vanilla SFT, \oursolution~improves the ID average from 48.5\% to 51.0\% (\textbf{+2.5} points) and the OOD average from 51.3\% to 54.2\% (\textbf{+2.9} points), while preserving higher token entropy (0.31 $\rightarrow$ 0.43, \textbf{+0.12} nats).  
Enforcing a KL constraint slows the training while offering little benefit for entropy preservation. Among SFT variants, GEM achieves the highest entropy by keeping away from overconfident distributions. However, this occurs at the cost of ungrounded entropy, leading it to underperform \oursolution~on in-distribution tasks. PSFT and DFT improve training stability, but since they do not explicitly target exploration preservation, their overall improvements remain limited. As shown in Figure~\ref{fig:n-gram}, we also report N-gram diversity (1 minus N-gram similarity) and confirm that entropy preservation consistently increases generation diversity. Overall, these results indicate that \oursolution~ successfully preserves effective entropy while mitigating knowledge forgetting during the SFT stage.

\begin{figure}[t]
  \includegraphics[width=\columnwidth]{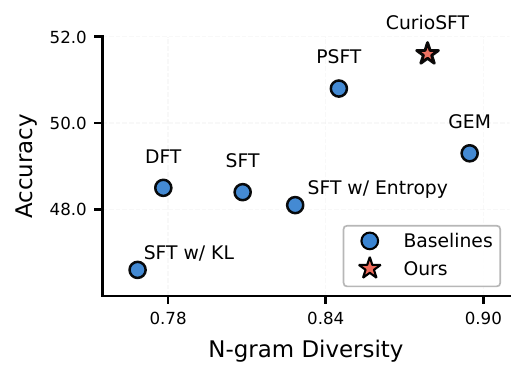}
  \vspace{-8mm}
  \caption{Accuracy vs. N-gram diversity. }
  \label{fig:n-gram}
  \vspace{-6mm}
\end{figure}

\paragraph{Ablation Study.}
We ablate key components of \oursolution~to quantify their contributions. Removing the Entropy-Guided Temperature Selection module (Section~\ref{sec:egat}) and using a fixed temperature $\hat{\tau}=1.3$ leads to a clear OOD drop (54.2\% $\rightarrow$ 52.2\%, \textbf{-2.0} points), highlighting the importance of token-level adaptivity for retaining knowledge. Next, we remove the separate teacher model and directly use the current model as the teacher, which causes a modest performance degradation, supporting the role of a stable teacher signal in providing reliable entropy-preserving guidance.

\paragraph{Complexity.}
\oursolution~introduces additional computation due to an extra forward pass and token-wise temperature search, but the overhead remains within a practical and acceptable range (43.2 $\rightarrow$ 53.9 seconds per step). Moreover, most of the cost can be reduced by removing the separate teacher, at the risk of a small performance drop.

\subsection{Unlocking the Potential of the SFT-then-RL Paradigm}
\label{sec:rl_results}

\paragraph{Baselines.}
We next examine whether the preserved entropy during SFT leads to \emph{meaningful} gains in the RL stage. We compare our solution against two families of baselines. First, we consider \textit{single-stage hybrid SFT+RL} methods that fuse offline demonstrations with on-policy exploration into one single stage. Specifically, LUFFY~\citep{luffy} optimizes an RL objective on a mixture of online rollouts and offline demonstrations; Prefix-RFT~\citep{prefix-rft} injects expert prefixes to steer exploration; and RL-PLUS~\citep{rl_plus} reuses expert examples during RL through multiple importance sampling. Second, we consider the standard \textit{two-stage SFT-then-RL} paradigm, where we run GRPO from the SFT checkpoints in Section~\ref{sec:sft_results}.

\paragraph{Results.}
Table~\ref{tab:rl_results} summarizes the results after the RL stage. \oursolution\,+\,GRPO pipeline achieves the best overall performance, improving the two-stage baseline SFT+RL from 53.7\% to 58.7\% on the in-distribution tasks (\textbf{+5.0} points) and from 59.3\% to 61.7\% on the OOD tasks (\textbf{+2.4} points). The gains are most pronounced on the challenging AIME benchmarks, where \oursolution+GRPO reach 39.2\% on AIME24 (vs.\ 32.1\% for SFT+RL) indicating that \oursolution~provides a substantially better initialization for RL exploration. Moreover, \oursolution\,+\,GRPO consistently outperforms single-stage hybrid methods, demonstrating that a well-designed SFT stage that preserves \emph{effective} exploration can unlock a higher RL performance ceiling.

\begin{table}[t]
\centering
\caption{Robustness across different backbones.}
\vspace{-2mm}
\label{tab:robustness}
\setlength{\tabcolsep}{4pt}
\renewcommand{\arraystretch}{1.3}
\resizebox{\linewidth}{!}{%
\begin{tabular}{lccccc>{\columncolor{orange!10}}c}
\toprule
\textbf{Method} & \textbf{AIME24/25} & \textbf{AMC} & \textbf{MATH.} & \textbf{Miner.} & \textbf{Olymp.} & \textbf{Avg.} \\
\midrule
\multicolumn{7}{c}{\textbf{Base model: Qwen3-4B-Base}} \\
Base model          &  6.3 / 4.2   & 42.0 & 53.0 & 17.9 & 20.8 & 24.0 \\
SFT            & 20.4 / 19.6  & 53.0 & 83.5 & 47.4 & 47.7 & 45.3 \\
SFT + GRPO  & 27.5 / 22.9  & 62.9 & 89.0 & 50.2 & 57.9 & 51.7 \\
\oursolution            & 21.3 / 25.0  & 54.4 & 84.2 & 48.3 & 48.2 & 46.9 \\
\oursolution + GRPO         & \textbf{28.8 / 27.9} & \textbf{65.0} & \textbf{89.7} & \textbf{53.6} & \textbf{59.1} & \textbf{54.0} \\
\midrule
\multicolumn{7}{c}{\textbf{Base model: Llama-3.1-8B-Instruct}} \\
Base model               & 2.1 / 2.5     & 19.3 & 43.2 & 26.3 & 14.8 & 18.0 \\
SFT                  & 8.3 / 11.7    & 38.3 & 68.1 & 32.7 & 35.6 & 32.5 \\
SFT + GRPO                 & 9.6 / \textbf{12.1}    & 40.3 & 74.1 & 35.5 & 38.9 & 35.1 \\
\oursolution         & 10.0 / 10.4    & 38.9 & 69.0 & 33.0 & 36.6 & 33.0 \\
\oursolution + GRPO  & \textbf{14.2} / 11.7 & \textbf{42.9} & \textbf{74.3} & \textbf{38.6} & \textbf{40.9} & \textbf{37.1} \\
\bottomrule

\end{tabular}%
}
\vspace{-5mm}
\end{table}
\subsection{Algorithm Robustness}
\label{sec:robustness}
We further evaluate \oursolution~on Qwen3-4B-Base~\citep{qwen3} and Llama-3.1-8B-Instruct~\citep{llama3}. As shown in Table~\ref{tab:robustness}, \oursolution~consistently improves over vanilla SFT on both backbones, improving the averaged accuracy from 45.3\% to 46.9\% on Qwen3-4B (\textbf{+1.6} points) and from 32.5\% to 33.0\% on Llama-3.1-8B-Instruct (\textbf{+0.5} points). These results validate that \oursolution~generalizes across model families and sizes.

\subsection{Hyperparameter Tuning}
\begin{figure}[t]
  \includegraphics[width=\columnwidth]{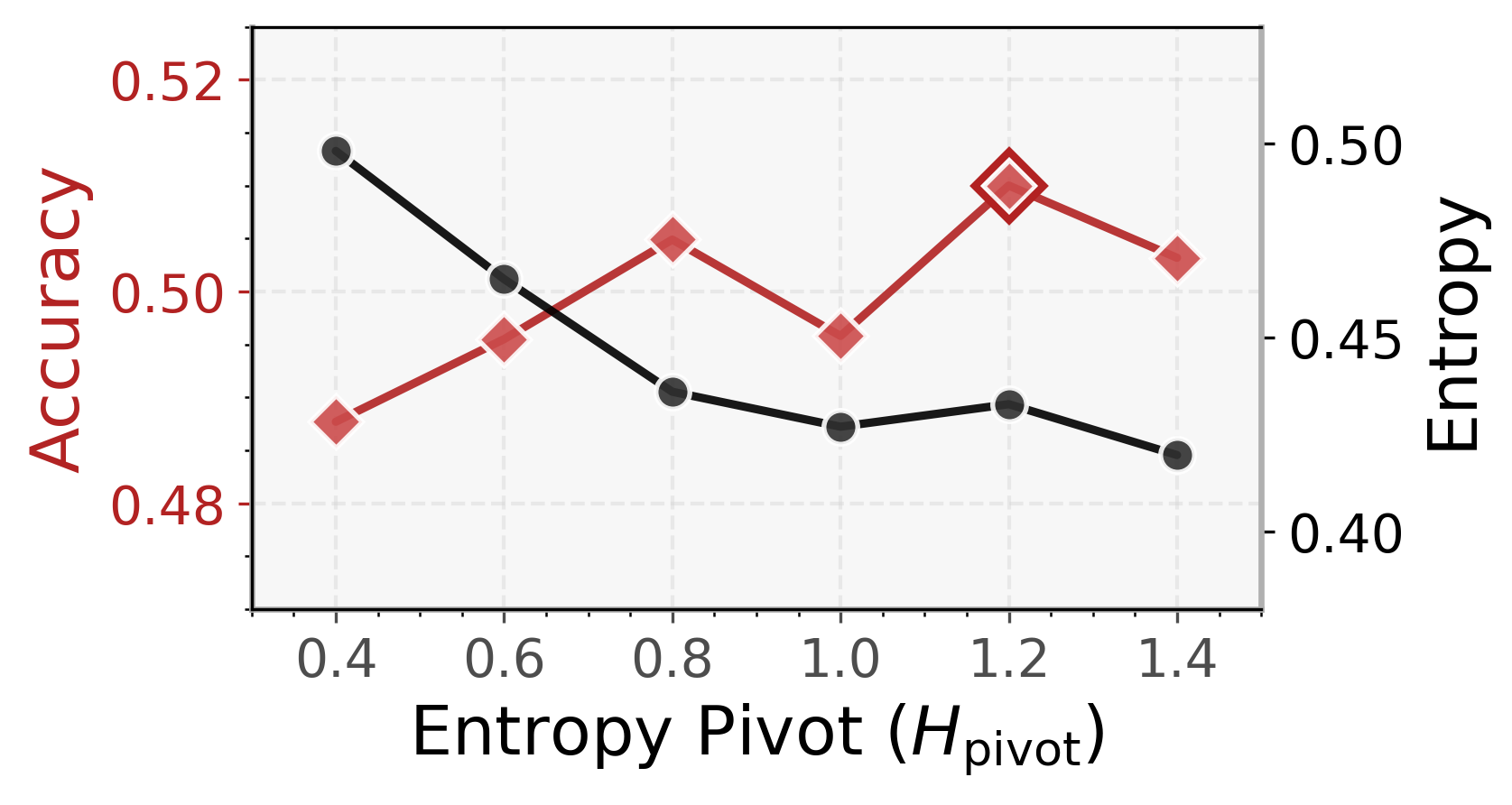}
  \vspace{-7mm}
  \caption{Sensitivity to the entropy pivot $H_{\text{pivot}}$.}
  \label{fig:hyper_tuning}
  \vspace{-5mm}
\end{figure}
Unlike entropy loss, which typically requires careful tuning of the loss weight, \oursolution~adaptively computes a token-wise temperature and thus avoids introducing an additional sensitive coefficient. In practice, the key hyperparameter is the entropy pivot $H_{\text{pivot}}$ in Eq.~(\ref{eq:entropy_increment}), which controls the overall strength of the entropy regularization. Increasing $H_{\text{pivot}}$ weakens the overall encouragement (fewer tokens receive a large entropy increment), while decreasing it makes the entropy increment larger for more tokens, thereby preserving more entropy. We sweep $H_{\text{pivot}}$ from 0.4 to 1.4. As shown in Figure~\ref{fig:hyper_tuning}, performance peaks around $H_{\text{pivot}}=1.2$. Importantly, across the entire range, \oursolution~consistently maintains performance gains over the baseline, indicating that the proposed method is robust to the choice of $H_{\text{pivot}}$.

\section{Related Work}

\paragraph{SFT in Post-Training.}
SFT plays an irreplaceable role in enhancing model capabilities during post-training~\citep{gpt,gemini}. Generally, SFT serves two primary functions: (a) serving as a large-scale \textit{knowledge injection}~\citep{sft_inject}, which significantly enhances zero-shot performance on OOD tasks~\citep{flan}; and (b) serving as a \textit{cold-start initialization} that enables the model to rapidly adapt to specific response patterns, thereby raising the performance ceiling for subsequent post-training stages (typically RL). For example, DeepSeek-R1~\citep{dpskr1} utilizes a small set of high-quality reasoning data to activate the model's inherent chain-of-thought capabilities before RL. ReTool~\citep{retool} constructs a diverse, high-quality tool-use dataset to instruct the model on when to invoke specific tools. In this paper, we focus on the latter role, investigating how SFT can be optimized to provide a \textit{superior initialization point} for the subsequent RL stage.

\paragraph{SFT-then-RL vs. Hybrid SFT with RL.}
Several studies argue that the two-stage ``SFT-then-RL'' paradigm may underperform compared to applying RL directly to the base model~\citep{sft_bad,sft_bad_extra,luffy,hpt,srft}. Hence, a line of research has explored fusing SFT and RL into a single-stage \textbf{hybrid paradigm}. For example, LUFFY~\citep{luffy} updates the model using both expert demonstrations and self-exploration rollouts via a weighted RL loss. RL-PLUS~\citep{rl_plus} introduces an exploration-based advantage function to balance SFT and RL losses. HPT~\citep{hpt} integrates the two objectives through a unified theoretical perspective. In contrast to these works, we empirically demonstrate that when intrinsic exploration capabilities are preserved during the SFT phase, the SFT-then-RL paradigm achieves a higher performance ceiling than hybrid approaches.

\paragraph{Diversity Regularization in SFT.}
The central challenge in SFT lies in its inherent susceptibility to overconfidence and diversity collapse. To address this, several works have explored regularization techniques. DFT~\citep{dft} re-weights token updates based on generation probabilities; GEM~\citep{gem} employs reverse KL divergence to prevent the distribution from converging to a collapsed mode; and PSFT~\citep{psft} imposes trust-region constraints to limit policy drift. However, none of these approaches address the problem from the perspective of entropy collapse, which is the key for effective exploration in the ``SFT-then-RL'' paradigm.

\section{Conclusion}
In this paper, we address a critical bottleneck in the SFT-then-RL paradigm: standard SFT induces entropy collapse that severely constricts downstream exploration. We propose \textbf{\oursolution}, an entropy-preserving SFT method utilizing adaptive self-distillation to maintain diverse yet valid exploration spaces. Experiments demonstrate that \oursolution~consistently outperforms vanilla SFT in both in-distribution and out-of-distribution tasks. More importantly, we verify that the preserved exploration effectively transfers to the RL stage, unlocking a significantly higher performance ceiling. 

\newpage
\section*{Limitations}
While \oursolution~effectively preserves exploration during SFT, we acknowledge two limitations. 
First, our method incurs additional training overhead compared to vanilla SFT, as it requires an extra forward pass to compute the teacher distribution and token-wise temperature selection. Nevertheless, the added cost remains within a practical range in our experiments. 
Second, our approach is bounded by the base model's intrinsic capabilities. Since we rely on self-distillation to preserve and amplify the model's latent exploration, the benefit may be smaller when the base model lacks sufficient prior knowledge. Future work could incorporate external signals to further enhance the exploration capability beyond what self-distillation alone can provide.

\bibliography{custom}

\newpage
\appendix

\appendix

\section{Derivation of Entropy Loss}
\label{apd:entropy_derivation}
Let $\mathcal{U}$ be the uniform distribution, i.e., $\mathcal{U}(y)=\frac{1}{|\mathcal{V}|}$.
The KL divergence from $\pi_\theta(\cdot\mid\mathbf{s})$ to $\mathcal{U}$ is:
\begin{align*}
D_{\mathrm{KL}}\!&\Big(\pi_\theta(\cdot\mid\mathbf{s}) \,\Big\|\, \mathcal{U}\Big) \\
&= \sum_{y\in\mathcal{V}} \pi_\theta(y\mid\mathbf{s})\log\frac{\pi_\theta(y\mid\mathbf{s})}{\mathcal{U}(y)} \\
&= \sum_{y\in\mathcal{V}} \pi_\theta(y\mid\mathbf{s})\Big(\log \pi_\theta(y\mid\mathbf{s}) - \log \tfrac{1}{|\mathcal{V}|}\Big) \\
&= \sum_{y\in\mathcal{V}} \pi_\theta(y\mid\mathbf{s})\log \pi_\theta(y\mid\mathbf{s}) + \log|\mathcal{V}| \\
&= -H\!\big(\pi_\theta(\cdot\mid\mathbf{s})\big) + \log|\mathcal{V}|.
\end{align*}
Since $\log|\mathcal{V}|$ is constant w.r.t.\ $\theta$, minimizing
$D_{\mathrm{KL}}\!\big(\pi_\theta(\cdot\mid\mathbf{s}) \,\|\, \mathcal{U}\big)$
is equivalent to maximizing $H\!\big(\pi_\theta(\cdot\mid\mathbf{s})\big)$.
Therefore, naive entropy regularization implicitly encourages the model distribution
to move toward the uniform distribution over $\mathcal{V}$.

\section{Proof of Monotonicity}
\label{app:entropy_proof}
\begin{theorem}
Let $\pi_\theta(\cdot \mid \mathbf{s};\tau)=\mathrm{softmax}\!\big(z_\theta(\cdot\mid\mathbf{s})/\tau\big)$
be the distribution derived from logits $z_\theta(\cdot\mid\mathbf{s})$ with temperature $\tau$.
Then the entropy $H\!\big(\pi_\theta(\cdot \mid \mathbf{s};\tau)\big)$ is non-decreasing with respect to $\tau$.
\end{theorem}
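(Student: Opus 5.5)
The plan is to differentiate the entropy with respect to $\tau>0$ and show the derivative equals a variance, hence is nonnegative. Write $z_y = z_\theta(y\mid\mathbf{s})$, set $\beta = 1/\tau$, and let $p_y(\beta) = \exp(\beta z_y)/Z(\beta)$ with $Z(\beta)=\sum_{y'\in\mathcal{V}}\exp(\beta z_{y'})$. Since $\tau\mapsto\beta$ is strictly decreasing on $(0,\infty)$, it suffices to show that $H(\beta)$ is non-increasing in $\beta$.

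First I will express the entropy in closed form. Using $\log p_y = \beta z_y - \log Z(\beta)$,
\[
H(\beta) = -\sum_y p_y \log p_y = \log Z(\beta) - \beta\,\mathbb{E}_{p}[z].
\]
Next I will use the standard exponential-family identities
\[
\frac{d}{d\beta}\log Z(\beta) = \mathbb{E}_p[z], \qquad \frac{d}{d\beta}\mathbb{E}_p[z] = \mathbb{E}_p[z^2]-\mathbb{E}_p[z]^2 = \mathrm{Var}_p(z).
\]
Both follow from differentiating $p_y(\beta)$ directly, since $\frac{d p_y}{d\beta} = p_y\,(z_y - \mathbb{E}_p[z])$; this calculation is routine.

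Combining the two gives
\[
\frac{dH}{d\beta} = \mathbb{E}_p[z] - \mathbb{E}_p[z] - \beta\,\mathrm{Var}_p(z) = -\beta\,\mathrm{Var}_p(z) \le 0 .
\]
By the chain rule with $d\beta/d\tau = -1/\tau^2$,
\[
\frac{dH}{d\tau} = \frac{1}{\tau^3}\,\mathrm{Var}_{\pi_\theta(\cdot\mid\mathbf{s};\tau)}\big(z_\theta(\cdot\mid\mathbf{s})\big) \ge 0,
\]
so $H$ is non-decreasing in $\tau$, which is the claim. The whole argument needs only a finite vocabulary, which makes $H$ smooth in $\tau$.

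The main obstacle is the bookkeeping of the differentiation rather than any conceptual step, so I will carry out the derivative of $p_y$ explicitly to avoid sign errors. I will also note the equality case for later use. The variance is positive unless all logits are equal, because every $p_y>0$. So the entropy is strictly increasing in $\tau$ unless $z_\theta(\cdot\mid\mathbf{s})$ is constant, in which case the distribution is uniform for every $\tau$. This strict version is what justifies the strict inequality $H(\pi^{\text{tch}}) > H(\pi_\theta)$ used in Section~\ref{sec:sed} and the uniqueness of the temperature found by the binary search in Section~\ref{sec:egat}.
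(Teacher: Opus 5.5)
Your proof is correct and follows essentially the same route as the paper: both reduce the claim to the identity $\frac{\partial H}{\partial\tau} = \frac{1}{\tau^3}\mathrm{Var}_{\pi}[z] \ge 0$. The only difference is bookkeeping, since you go through $\beta = 1/\tau$ and the log-partition identities while the paper differentiates in $\tau$ directly using the softmax derivative, and your equality-case remark (strict increase unless all logits coincide) matches the paper's note that the variance vanishes only for the uniform distribution.
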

\begin{proof}
For brevity, let $\pi_y$ denote $\pi_\theta(y \mid \mathbf{s};\tau)$,
and let $z_y$ denote the logit for token $y$ (i.e., $z_y := z_\theta(y\mid\mathbf{s})$).
Recall that:
\[
\log \pi_y = \frac{z_y}{\tau} - \log Z(\tau),
\]
where $Z(\tau)$ is the partition function. The derivative of the entropy
$H(\pi) = -\mathbb{E}_{\pi}[\log \pi]$ with respect to $\tau$ is derived as follows:
\begin{align}
\frac{\partial H}{\partial \tau}
&= - \sum_{y} \frac{\partial \pi_y}{\partial \tau} \log \pi_y
    - \sum_{y} \pi_y \frac{\partial \log \pi_y}{\partial \tau} \nonumber \\
&= - \sum_{y} \frac{\partial \pi_y}{\partial \tau} \log \pi_y
\quad (\text{since } \sum_y \pi_y = 1) \nonumber \\
&= - \sum_{y} \frac{\partial \pi_y}{\partial \tau}
\left( \frac{z_y}{\tau} - \log Z \right) \nonumber \\
&= - \frac{1}{\tau} \sum_{y} z_y \frac{\partial \pi_y}{\partial \tau}.
\label{eq:short_deriv}
\end{align}
Using the standard derivative of the softmax function $\frac{\partial \pi_y}{\partial \tau}
= \frac{\pi_y}{\tau^2}\Big(\mathbb{E}_{\pi}[z] - z_y\Big)$
and substituting this into Eq.~\eqref{eq:short_deriv}, we have:
\begin{align}
\frac{\partial H}{\partial \tau}
&= - \frac{1}{\tau^3} \sum_{y} \pi_y z_y \Big(\mathbb{E}_{\pi}[z] - z_y\Big) \nonumber \\
&= \frac{1}{\tau^3} \left[ \sum_{y} \pi_y z_y^2 - \left(\sum_{y} \pi_y z_y\right)^2 \right] \nonumber \\
&= \frac{1}{\tau^3} \left( \mathbb{E}_{\pi}[z^2] - (\mathbb{E}_{\pi}[z])^2 \right) \nonumber \\
&= \frac{1}{\tau^3} \mathrm{Var}_{\pi}[z].
\end{align}
Since $\mathrm{Var}_{\pi}[z] \ge 0$ and $\tau > 0$, the derivative is always non-negative. Notably, the variance $\mathrm{Var}_{\pi}[z]$ vanishes if and only if the distribution $\pi$ is uniform. Thus, $H\!\big(\pi_\theta(\cdot \mid \mathbf{s};\tau)\big)$ is \textit{strictly non-decreasing} in $\tau$, except in the case of a uniform distribution.
\end{proof}

\section{Temperature scaling as the KL-closest higher-entropy teacher}
\label{sec:t_scaling}
To reduce overconfidence while staying close to $\pi$, we aim to construct a teacher
distribution $\pi^{\text{tch}}$ that satisfies two requirements:
(i) it has \emph{higher entropy} than the current policy, so it encourages exploration; and
(ii) it remains \emph{KL-close} to $\pi$, so the supervision signal is stable and capability-aware.
This naturally leads to the following constrained optimization problem:
\begin{align}
\label{eq:entropy_proj}
\mathbf{P1:}\,\,\min_{\pi^{\text{tch}}}\quad
& D_{\mathrm{KL}}\!\Big(\pi^{\text{tch}}(\cdot\mid\mathbf{s})
\,\Big\|\,
\pi(\cdot\mid\mathbf{s})\Big) \\
\text{s.t.}\quad
& H\!\big(\pi^{\text{tch}}(\cdot\mid\mathbf{s})\big)
\;\ge\;
H\!\big(\pi(\cdot\mid\mathbf{s})\big)+\Delta. \nonumber
\end{align}

\begin{theorem}[Temperature scaling is the optimum of \textbf{P1}]
\label{therem:t_scaling}
Given an entropy increment $\Delta$, the unique optimum $\pi_*^{\text{tch}}$ of \eqref{eq:entropy_proj} is a temperature-scaled distribution:
there exists a unique $\hat{\tau}>1$ such that
\begin{equation}
\label{eq:q_star_temp}
\begin{aligned}
\pi_*^{\text{tch}}(y\mid\mathbf{s})
&=\pi(y\mid\mathbf{s};\hat{\tau}) \\
&=\frac{\exp\!\big(z(y\mid\mathbf{s})/\hat{\tau}\big)}
{\sum_{y'\in\mathcal{V}}
\exp\!\big(z(y'\mid\mathbf{s})/\hat{\tau}\big)},
\end{aligned}
\end{equation}
and it satisfies
$H\!\big(\pi_*^{\text{tch}}(\cdot\mid\mathbf{s})\big)=H\!\big(\pi(\cdot\mid\mathbf{s})\big)+\Delta.$
\end{theorem}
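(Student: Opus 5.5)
The plan is to treat \textbf{P1} as a convex program over the probability simplex and read the optimizer off its KKT conditions. Write $\pi=\pi(\cdot\mid\mathbf{s})$, which has full support because it is a softmax of finite logits. Write $c=H(\pi)+\Delta$ and assume $0<\Delta<\log|\mathcal{V}|-H(\pi)$; this assumption is implicit in the statement, since otherwise \textbf{P1} is infeasible or trivial.

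\textbf{Convexity, existence and uniqueness.} The objective $q\mapsto D_{\mathrm{KL}}(q\,\|\,\pi)$ is strictly convex and continuous on the simplex. The feasible set $\{q: H(q)\ge c\}$ is a superlevel set of the concave function $H$, so it is convex. It is also closed, hence compact. A minimizer therefore exists, and strict convexity makes it unique. Slater's condition holds because the uniform distribution $\mathcal{U}$ satisfies $H(\mathcal{U})=\log|\mathcal{V}|>c$. Hence the KKT conditions are necessary and sufficient for optimality.

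\textbf{KKT computation.} Form the Lagrangian
\[
\mathcal{L}(q,\lambda,\nu)=\sum_y q_y\log\frac{q_y}{\pi_y}-\lambda\big(H(q)-c\big)+\nu\Big(\sum_y q_y-1\Big),\qquad \lambda\ge 0.
\]
First I will rule out an optimizer on the boundary of the simplex. At a face $q_y=0$ the derivative of $q_y\log q_y$ diverges to $-\infty$, so moving mass onto that coordinate strictly decreases $\mathcal{L}$. Next, setting $\partial\mathcal{L}/\partial q_y=0$ gives
\[
\log q_y-\log\pi_y+1+\lambda(\log q_y+1)+\nu=0.
\]
This rearranges to $(1+\lambda)\log q_y=\log\pi_y+\text{const}$, i.e.\ $q_y\propto \pi_y^{1/(1+\lambda)}\propto \exp\big(z_y/(1+\lambda)\big)$. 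So every KKT point is the temperature-scaled distribution $\pi(\cdot\mid\mathbf{s};\hat\tau)$ with $\hat\tau=1+\lambda\ge 1$.

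\textbf{Active constraint and identification of $\hat\tau$.} If $\lambda=0$, the stationary point is $q=\pi$, which violates $H(q)\ge H(\pi)+\Delta$ because $\Delta>0$. Hence $\lambda>0$, and complementary slackness forces $H(q)=c$ exactly, which gives both $\hat\tau>1$ and the entropy identity in the statement. It remains to show that exactly one $\hat\tau>1$ attains $H(\pi(\cdot;\hat\tau))=c$. I will use the monotonicity theorem of Appendix~\ref{app:entropy_proof}: $\partial_\tau H=\tau^{-3}\mathrm{Var}_{\pi(\cdot;\tau)}[z]$, which is strictly positive unless all logits are equal. Equal logits would make $\pi$ uniform, and then no feasible $\Delta>0$ exists, so this case is excluded. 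As a consequence, $\tau\mapsto H(\pi(\cdot;\tau))$ is continuous and strictly increasing. It equals $H(\pi)$ at $\tau=1$ and tends to $\log|\mathcal{V}|$ as $\tau\to\infty$. By the intermediate value theorem and strict monotonicity, there is a unique $\hat\tau\in(1,\infty)$ with $H(\pi(\cdot;\hat\tau))=c$. By KKT sufficiency this distribution is the unique optimum of \textbf{P1}.

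\textbf{Main obstacle.} The step needing the most care is justifying the KKT conditions rigorously on the simplex. This involves excluding boundary optimizers and invoking Slater's condition. The nonlinear, non-affine entropy constraint is fine here precisely because its superlevel set is convex. I also need to state the feasibility range of $\Delta$ explicitly, since the theorem is silent on it.
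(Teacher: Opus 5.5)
Your proof is correct and follows the same route as the paper's: KKT stationarity gives $q_y\propto\pi_y^{1/(1+\lambda)}\propto\exp\big(z_y/(1+\lambda)\big)$, and complementary slackness with $\Delta>0$ forces $\lambda>0$, so $\hat\tau>1$ and the entropy constraint is active. The monotonicity result of Appendix~\ref{app:entropy_proof} then pins down $\hat\tau$ uniquely, and convexity of \textbf{P1} makes this KKT point the global optimum; your additions fill in points the paper leaves implicit, namely the explicit range $0<\Delta<\log|\mathcal{V}|-H(\pi)$ (which the paper's existence claim for $\hat\tau$ silently needs), Slater's condition, the exclusion of boundary optimizers, and strict convexity of $D_{\mathrm{KL}}(\cdot\,\|\,\pi)$ for uniqueness of the optimum.
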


\begin{proof}
We solve \textbf{P1} by forming its Lagrangian and applying the KKT conditions.
For convenience, denote the target entropy as
$H^{\text{tar}}=H\!\big(\pi(\cdot\mid\mathbf{s})\big)+\Delta.$
Rewrite the inequality constraint in the standard KKT form:
\[
g(\pi^{\text{tch}}):=H^{\text{tar}}-H\!\big(\pi^{\text{tch}}(\cdot\mid\mathbf{s})\big)\le 0,
\]
with KKT multiplier $\lambda_H\ge 0$. Since
$H(\pi^{\text{tch}})=-\sum_{y\in\mathcal{V}}\pi^{\text{tch}}(y\mid\mathbf{s})\log \pi^{\text{tch}}(y\mid\mathbf{s})$,
we have
$g(\pi^{\text{tch}})=H^{\text{tar}}+\sum_{y\in\mathcal{V}}\pi^{\text{tch}}(y\mid\mathbf{s})\log \pi^{\text{tch}}(y\mid\mathbf{s})$.
The Lagrangian of \textbf{P1} is:
\begin{equation}
\label{eq:lagrangian}
\begin{aligned}
\mathcal{L}&(\pi^{\text{tch}},\lambda,\lambda_H)=\\
&\sum_{y\in\mathcal{V}}
\pi^{\text{tch}}(y\mid\mathbf{s})
\log\frac{\pi^{\text{tch}}(y\mid\mathbf{s})}{\pi(y\mid\mathbf{s})} \\
&+\lambda\Big(\sum_{y\in\mathcal{V}}\pi^{\text{tch}}(y\mid\mathbf{s})-1\Big) \\
&+\lambda_H\Big(
H^{\text{tar}}
+\sum_{y\in\mathcal{V}}
\pi^{\text{tch}}(y\mid\mathbf{s})
\log \pi^{\text{tch}}(y\mid\mathbf{s})
\Big),
\end{aligned}
\end{equation}
where $\lambda$ is the multiplier for normalization and $\lambda_H\ge 0$ is the KKT multiplier for $g(\pi^{\text{tch}})\le 0$.
Taking the derivative w.r.t.\ $\pi^{\text{tch}}(y\mid\mathbf{s})$ and setting it to zero:
\begin{equation}
\label{eq:kkt_step}
\begin{aligned}
&(1+\log \pi^{\text{tch}}(y\mid\mathbf{s}))
-\log \pi(y\mid\mathbf{s})
+\lambda \\
&\qquad+\lambda_H\big(1+\log \pi^{\text{tch}}(y\mid\mathbf{s})\big)=0.
\end{aligned}
\end{equation}
Simplifying the above equation gives:
\begin{equation}
\label{eq:q_prop_p}
\log \pi^{\text{tch}}(y\mid\mathbf{s})
=
\frac{1}{1+\lambda_H}\log \pi(y\mid\mathbf{s}) + C,
\end{equation}
where $C$ is a normalization constant independent of $y$.
Define $\hat{\tau}:=1+\lambda_H$, then we obtain a power-form solution:
\begin{equation}
\label{eq:power_form}
\pi^{\text{tch}}(y\mid\mathbf{s})
\propto
\pi(y\mid\mathbf{s})^{1/\hat{\tau}}.
\end{equation}
When $\Delta>0$, the entropy constraint must be active at optimum; otherwise one could move $\pi^{\text{tch}}$ closer to $\pi$
and strictly decrease $D_{\mathrm{KL}}(\pi^{\text{tch}}\|\pi)$ while remaining feasible.
Thus, by complementary slackness, $\lambda_H>0$ and hence $\hat{\tau}>1$.
Using $\pi(y\mid\mathbf{s})\propto \exp(z(y\mid\mathbf{s}))$, we have:
\begin{equation}
\label{eq:power_to_temp}
\begin{aligned}
\pi^{\text{tch}}(y\mid\mathbf{s})
&\propto \pi(y\mid\mathbf{s})^{1/\hat{\tau}}
\;\propto\;
\Big(\exp(z(y\mid\mathbf{s}))\Big)^{1/\hat{\tau}} \\
&\propto\; \exp\!\big(z(y\mid\mathbf{s})/\hat{\tau}\big),
\end{aligned}
\end{equation}
which is exactly the temperature-scaled softmax form in \eqref{eq:q_star_temp} after normalization.
Finally, by Appendix~\ref{app:entropy_proof}, under the full-support assumption the entropy
$H\!\big(\pi(\cdot\mid\mathbf{s};\tau)\big)$ is continuous and strictly increasing in $\tau$,
so there exists a unique $\hat{\tau}>1$ such that
$H\!\big(\pi(\cdot\mid\mathbf{s};\hat{\tau})\big)=H^{\text{tar}}$. Since $D_{\mathrm{KL}}(\cdot\|\pi)$ is convex in its first argument and the feasible set is convex because entropy is concave, \textbf{P1} is a convex optimization problem. Therefore, any distribution that satisfies the KKT conditions is globally optimal. Consequently, the teacher distribution constructed by temperature scaling is the KL-closest distribution to $\pi$ among all distributions whose entropy is increased by at least $\Delta$.
\end{proof}

\section{Efficient Implementation of Entropy-Guided Temperature Search}
\label{app:binary_search}
Leveraging the monotonic relationship between entropy and the sampling temperature, we can find a unique temperature $\hat{\tau_t}$ whose entropy matches a desired target via binary search. To make this procedure efficient in large-scale LLM training, we implement it with two key design choices:

\begin{itemize}[leftmargin=*]
    \item \textbf{Vectorized binary search.}
    Instead of searching for $\hat{\tau_t}$ token by token, we perform a batched binary search over all tokens in a mini-batch using vectorized PyTorch operations.

    \item \textbf{Top-$k$ entropy approximation.}
    The softmax distribution over the vocabulary is typically heavy-tailed, so the entropy is dominated by the highest-probability tokens. We therefore approximate the full entropy using only the top-$k$ logits:
    \begin{equation}
        H\big(\pi_\theta(\cdot\mid\mathbf{s}_t)\big)
        \approx -\sum_{y \in \mathcal{V}_{:k}}
        \hat{\pi}_\theta(y\mid\mathbf{s}_t)
        \log \hat{\pi}_\theta(y\mid\mathbf{s}_t),
    \end{equation}
    where $\mathcal{V}_{:k}$ denotes the top-$k$ tokens and
    $\hat{\pi}_\theta$ is the distribution renormalized over this subset.
\end{itemize}
In all experiments, we set $k = 512$, which reduces the complexity of the entropy computation from $O(|\mathcal{V}|)$ to $O(k)$ and leads to only a modest training overhead.

\begin{table}[t]
\centering
\caption{Hyperparameters in SFT stage}
\label{appendix:sft_setting}
\begin{tabular}{l l}
\toprule
\textbf{Parameter Name}              & \textbf{Value}                                      \\ 
\midrule
Epochs & 3 \\
Batch Size     & 256  \\ 
Max Response Length   & 8192     \\ 
Learning Rate       & 1e-5           \\ 
Warm Up Style & cosine \\
Warm Up Steps & 60 \\
\bottomrule
\end{tabular}
\end{table}

\begin{table}[t]
\centering
\caption{Hyperparameters in RL stage}
\label{appendix:rl_setting}
\begin{tabular}{l l}
\toprule
\textbf{Parameter Name}              & \textbf{Value}                                      \\ 
\midrule
Training Steps & 500 \\
Batch Size     & 128  \\ 
Mini Batch Size    & 64      \\
Max Response Length   & 8192     \\ 
Learning Rate       & 1e-6           \\ 
Clip Higher  & 0.28 \\
Clip Lower & 0.2 \\
KL coefficient & 0.0 \\
Rollout Numbers & 8 \\
Rollout Temperature & 1.0 \\
Rollout Top\_p & 0.95 \\

\bottomrule
\end{tabular}
\end{table}

\section{Experiment Setting}
\label{appendix-exp-setting}

\paragraph{Training.}
For both SFT and RL, we use \texttt{Verl}~\citep{verl} as the training framework and vLLM~\citep{vllm} as the inference engine. All experiments are conducted on $8\times$ NVIDIA H800 GPUs. We update the teacher model every $n=5$ steps with a decay $\mu=0.99$. The full SFT hyperparameters are provided in Table~\ref{appendix:sft_setting}. For the RL stage, we generate $N=8$ candidate responses per question to estimate the advantage. We use a binary reward: a response receives reward 1 if it matches the ground truth (verified by \texttt{Math-Verify}~\citep{math-verify}) and follows the required format, and 0 otherwise. The full RL hyperparameters are listed in Table~\ref{appendix:rl_setting}. We use the prompt template in Table 7 for both SFT and RL training.

\paragraph{Dataset Processing.}
We use OpenR1-Math~\citep{luffy} for both SFT and RL training. We observe that a large portion of the questions contain irrelevant or distracting information. To reduce such noise, we rewrite the questions using DeepSeek-V3~\cite{dpsk-v3} and keep the original ground-truth answers unchanged. Example rephrase prompts are shown in Table 8.

\vspace{3mm}
\begin{paperbox}{Table 7: Training Prompt}
\label{box:rephrase}
<|im\_start|>system \\
You are an exceptional mathematician. Your task is to solve mathematical questions through a systematic and thorough reasoning process. This involves careful analysis, exploration of possible approaches, verification of intermediate steps, critical reassessment, and iterative refinement of your reasoning process. \\ \\
Structure your response in two distinct sections: ``Thought'' and ``Solution''. \\
In the ``Thought'' section, present your detailed reasoning process in the following format:\\
<think>\\
{Your detailed reasoning, including brainstorming, logical deductions, verification, and refinement of ideas.}\\
</think>\\
This section must conclude with ``</think>'', and should reflect deep, reflective, and self-correcting thinking process.\\
In the ``Solution'' section, following the ``</think>'', concisely draw the final, logical, and accurate answer from your reasoning. \\  \\
Please output your final answer within \textbackslash\textbackslash boxed\{\}.<|im\_end|>\\
<|im\_start|>user\\
Here is the question:\textcolor{blue}{\{question\}}<|im\_end|>\\
<|im\_start|>assistant\\
<think>\\
\end{paperbox}

\begin{paperbox}{Table 8: Question Rephrase Prompt}
\label{box:training}
I will provide a post from a math-related forum that contains a math problem. Your task is to extract only the math problem statement and remove any irrelevant or noisy content (e.g., commentary, solutions, chat, metadata). Keep the original wording and question type intact, and present the extracted problem clearly and concisely.
\\
Remove any redundant context, personal commentary, anecdotes, or unrelated information. But make sure not to change the meaning of the problem and keep all necessary mathematical or technical details.\\\\
Here are a few examples.\\
Example 1:\\
Input:\\
What is the remainder of $8^6+7^7+6^8$ is divided by $5$?\\
no calculator, of course, paper isn't needed either, but sure.\\
\\
Output:\\
What is the remainder of $8^6+7^7+6^8$ when divided by 5?\\
\\
Example 2:\\
Input:\\
(20 points) Let $x, y$ be non-zero real numbers, and satisfy $\frac{x \sin \frac{\pi}{5} + y \cos \frac{\pi}{5}}{x \cos \frac{\pi}{5} - y \sin \frac{\pi}{5}} = \tan \frac{9 \pi}{20}$.(1) Find the value of $\frac{y}{x}$;\\
\\
Output:\\
Let $x, y$ be non-zero real numbers, and satisfy $\frac{x \sin \frac{\pi}{5} + y \cos \frac{\pi}{5}}{x \cos \frac{\pi}{5} - y \sin \frac{\pi}{5}} = \tan \frac{9 \pi}{20}$. Find the value of $\frac{y}{x}$.
\\
\\
Now, here is the text you need to extract the problem.\\
\\
Input:\\
\textcolor{blue}{\{question\}}\\
\\
Output:\\
\end{paperbox}

\end{document}